\documentclass{article}

\usepackage{PRIMEarxiv}

\usepackage{microtype}
\usepackage{graphicx}
\usepackage{subfigure}
\usepackage{booktabs} 

\usepackage{hyperref}
\usepackage{adjustbox}
\usepackage{algorithm}
\usepackage{algpseudocode}
\usepackage{balance}
\usepackage{amsmath}
\usepackage{amssymb}
\usepackage{mathtools}
\usepackage{amsthm}
\usepackage{multirow}
\usepackage{lipsum}
\theoremstyle{definition}
\newtheorem{example}{Example}[section]
\usepackage{hyperref}
\usepackage{amsthm}

\makeatletter
\def\thmheadbrackets#1#2#3{%
  \thmname{#1}\thmnumber{\@ifnotempty{#1}{ }\@upn{#2}}%
  \thmnote{ {\the\thm@notefont[#3]}}}
\makeatother

\newtheoremstyle{brakets}
  {}
  {}
  {\itshape}
  {}
  {\bfseries}
  {.}
  { }
  {\thmheadbrackets{#1}{#2}{#3}}

\theoremstyle{brakets}

\usepackage{footnote}
\usepackage{etoolbox}
\makeatletter
\patchcmd{\@makecaption}
  {\scshape}
  {}
  {}
  {}
\makeatother
\usepackage{xcolor}

\usepackage{amsfonts}       
\usepackage{nicefrac}       
\usepackage{cite}

\newtheorem{proposition}{Proposition}

\newtheorem{definition}{Definition}
\usepackage{bm}
\usepackage{amsmath}

\DeclareMathOperator*{\argmin}{arg\,min}
\usepackage{dirtytalk}
\usepackage{graphicx,float} 
\usepackage{amssymb}
\usepackage{algorithm}
\usepackage{amsmath,bm}
\usepackage{makecell}

\usepackage{hyperref}
\usepackage{booktabs}
\usepackage{lipsum}
\usepackage{multicol}
\usepackage{graphicx}
\usepackage[capitalize,noabbrev]{cleveref}

\theoremstyle{plain}
\theoremstyle{definition}
\theoremstyle{remark}

\hyphenation{op-tical net-works semi-conduc-tor}

\newif\ifcomments
\commentstrue
%
\ifcomments
    \usepackage{ulem}
    \def\picomment#1{{$ $\color{green} [PI: #1]}}
    \def\piedit#1#2{{{{\sout{\color{red}#1}}}}{{\color{blue}#2}}}
\else 
    \def\picomment#1{}
    \def\piedit#1#2{}{#2}
\fi

\ifcomments
    \usepackage{ulem}
    \def\saedit#1{{$ $\color{blue} [SA: #1]}}
    
\else 
    \def\saedit#1{}
    
\fi

\normalem

\title{Trade-off between reconstruction loss and feature alignment for domain generalization}

\author{
  Thuan Nguyen \\
 Department of Computer Science \\ Tufts University \\ Medford, MA 02155 \\
  \texttt{Thuan.Nguyen@tufts.edu} \\
   \And
  Boyang Lyu \\
  Department of Electrical and Computer Engineering \\ Tufts University \\ Medford, MA 02155\\
  \texttt{Boyang.Lyu@tufts.edu} \\
     \And
  Prakash Ishwar \\
  Department of Electrical and Computer Engineering \\ Boston University \\ Boston, MA 02215\\
  \texttt{pi@bu.edu} \\
  \And
    Matthias Scheutz \\
 Department of Computer Science \\ Tufts University \\ Medford, MA 02155 \\
  \texttt{Matthias.Scheutz@tufts.edu} \\
     \And
  Shuchin Aeron \\
  Department of Electrical and Computer Engineering \\ Tufts University \\ Medford, MA 02155\\
  \texttt{Shuchin@ece.tufts.edu}
}

\begin{document}
\maketitle

\begin{abstract}
Domain generalization (DG) is a branch of transfer learning that aims to train the learning models on several seen domains and subsequently apply these pre-trained models to other unseen (unknown but related) domains. To deal with challenging settings in DG where both data and label of the unseen domain are not available at training time, the most common approach is to design the classifiers based on the domain-invariant representation features, \textit{i.e.}, the latent representations that are unchanged and transferable between domains. Contrary to popular belief, we show that designing classifiers based on invariant representation features alone is necessary but insufficient in DG. Our analysis indicates the necessity of imposing a constraint on the reconstruction loss induced by representation functions to preserve most of the relevant information about the label in the latent space. More importantly, we point out the trade-off between minimizing the reconstruction loss and achieving domain alignment in DG. Our theoretical results motivate a new DG framework that jointly optimizes the reconstruction loss and the domain discrepancy. Both theoretical and numerical results are provided to justify our approach. 
\end{abstract}

\section{Introduction}

Domain Generalization (DG) has been widely studied over the past decade \cite{wang2022generalizing},  \cite{zhou2021domain}. Similar to Domain Adaptation (DA), DG aims to design a classifier based on one or several seen (source) domains and then apply these pre-trained classifiers to the unseen (target) domains. While accessing the data from unseen (target) domains is allowed in DA, this is strictly prohibited in DG, leading to a more challenging problem. 

In practice, without any knowledge about unseen domains, one of the most common methods in DG is first to look for domain-invariant features, \textit{i.e.}, the features that are general and transferable between domains, then design pre-trained classifiers based on these features. This approach is known as domain-invariant or domain-alignment representation learning and is widely considered as one of the most promising and efficient approaches in DG  (please see our short survey in Sec. \ref{sec: related work}). Though domain-invariant representation learning is possible to learn invariant features, recent work \cite{johansson2019support} shows that this technique does not account for the information loss caused by non-invertible representation maps which consequently motivates the use of (nearly) invertible representation maps \cite{johansson2019support,lyu2021barycentric}. 

In this paper, based on the theme of domain-invariant representation learning and the information-theoretic point of view, we indicate the necessity of imposing a constraint on the reconstruction loss induced by representation mappings to retain the relevant information about the label in learned features\footnote{Note that a smaller reconstruction loss between input data and its representation implies the representation function is nearly invertible. Thus, our results agree with the work in \cite{johansson2019support}, \cite{lyu2021barycentric} that using nearly invertible representation functions is  necessary for DG.}. In addition, we show that there is a trade-off between minimizing the reconstruction loss and minimizing the discrepancy between domains. 

In this paper, our contributions include:
\vspace{2 pt}
\begin{enumerate}
    \item We derive a lower bound on mutual information between the latent representation and their labels to demonstrate the necessity of imposing a constraint on reconstruction loss  in DG $\bm{\lbrack}$Proposition \ref{prop: 1}$\bm{\rbrack}$.
    \vspace{2 pt}
    
    \item We characterize the trade-off between minimizing the reconstruction loss \textit{vs.} minimizing the discrepancy of joint distributions between domains. In other words, we show that it is impossible to perfectly accomplish these two objectives at the same time $\bm{\lbrack}$Proposition \ref{prop: 2}$\bm{\rbrack}$. 
    \vspace{2 pt}
    
    \item We propose a new DG learning framework that directly accounts for both the reconstruction loss and the discrepancy between domains and demonstrate the efficiency of our proposed framework on several datasets.  
\end{enumerate}

The remainder of this paper is structured as follows. In Section \ref{sec: related work}, we provide a short survey of existing works on DG. In Section \ref{sec: problem setup}, we formally describe the problem formulation and clarify the notations. Section \ref{sec: pre} provides some initial results which support to our main results in Section \ref{sec: main results}. The practical approach is described in Section \ref{sec: practical method}. Finally, we provide the numerical results in Section \ref{sec: numerical results} and conclude in Section \ref{sec: conclusion}.

\section{Related work}
\label{sec: related work}
The most common approach in DG is domain-invariant representation learning which aims to extract domain-invariant features and then design a classifier based on these features \cite{arjovsky2019invariant,lu2021nonlinear,ahuja2021invariance,li2022invariant,du2020learning,ganin2016domain,mahajan2021domain,zhou2020domain,ahuja2020invariant}. Domain-invariant approach, however, requires two key assumptions: (a) the domain-invariant features must exist and be shared between domains, and (b) the domain-invariant features must be strongly correlated with labels \cite{arjovsky2019invariant},  \cite{nagarajan2020understanding,bui2021exploiting,nguyen2022conditional}. In addition, to precisely obtain the domain-invariant features, one usually requires the availability of a sufficiently large number of seen domains at training time \cite{nguyen2022conditional},  \cite{chen2021iterative}. Therefore, if (a) the invariant features are neither existent nor strongly correlated with the label, or (b) the number of observed (seen) domains is not large enough,  domain-invariant methods may fail \cite{rosenfeld2021risks,aubin2021linear,kamath2021does,gulrajani2020search}.

Domain-invariant representation learning can be categorized into two main branches: (a) marginal distribution-invariant methods (covariate-alignment), \textit{i.e.}, learning the features such that their distributions are unchanged according to domains, and (b) conditional distribution-invariant methods (concept-alignment), \textit{i.e.}, learning the features such that the conditional distributions of labels given features are stable from domain to domain. The first branch (covariate-alignment) includes the works in \cite{russo2018source,hu2018duplex, muandet2013domain, li2018domain, ghifary2016scatter, erfani2016robust, jin2020feature, blanchard2021domain, shen2018wasserstein}. Particularly, in \cite{muandet2013domain}, \cite{erfani2016robust}, the authors employ deep neural networks to learn transformations such that the differences between variances of transformed features over seen domains are minimized. Similarly, Li \textit{et al.} \cite{li2018domain} propose a method called Maximum Mean Discrepancy (MMD) that aims to minimize the maximum of the mean discrepancy between marginal distributions in seen domains. Sun and Saenko \cite{sun2016deep} propose a method that not only matches the mean but also synchronizes the covariance of feature distributions over different domains. Shen \textit{et al.} \cite{shen2018wasserstein} minimize the Wasserstein distance between marginal distributions of representation variables from different seen domains in latent space to extract invariant features. In a new approach, Bui \textit{et al.} \cite{bui2021exploiting} desire to learn domain-invariant features (with marginal distributions unchanged according to domains) together with domain-specific features to enhance the generalization performance. The second branch (concept-alignment) includes the works in \cite{arjovsky2019invariant,li2022invariant,ahuja2021invariance,wang2021respecting, zhao2020domain, salaudeen2021exploiting, lu2021nonlinear,ahuja2020invariant}. Particularly, linear/non-linear Invariant Risk Minimization algorithms are proposed in \cite{arjovsky2019invariant,lu2021nonlinear} to find a common optimal linear/non-linear classifier over all observed domains under a key assumption that a common optimal classifier exists if the conditional distributions of the learned features are stable from domain to domain. Li \textit{et al.} \cite{li2022invariant} propose a method to extract domain-invariant features via minimizing the mutual information of label given extracted feature for a given domain. Wang \textit{et al.} \cite{wang2021respecting} minimize the Kullback–Leibler (KL) divergence between conditional distributions in each class to obtain domain-invariant features. There are also several works that aim to simultaneously achieve both covariate-alignment and concept-alignment by minimizing the divergence of joint distributions between domains \cite{nguyen2022joint,yang2020class,guo2021out,li2018domain}. 

It is worth noting that domain-invariant methods may fail under some particular settings, for example, if the labels more strongly correlate with the spurious features than with the true invariant features \cite{ahuja2021invariance},  \cite{nguyen2022conditional}. To prevent the failure of learning models in these particular scenarios, Ahuja \textit{et al.} suggest adding a constraint on the entropy of extracted features to capture the true invariant features \cite{ahuja2021invariance}. In a similar vein, Nguyen \textit{et al.} employ the conditional entropy minimization principle to eliminate the spurious-invariant features \cite{nguyen2022conditional}. 

\section{Problem Formulation}
\label{sec: problem setup}
\subsection{Notations}
Let $\mathcal{X}$, $\mathcal{Z}$, $\mathcal{Y}$ denote the input space, the representation space, and the label space, respectively. For a given family of domains $\mathcal{D}$, suppose that the data from $S$ observed (seen) domains $\mathcal{D}^{(1)},\mathcal{D}^{(2)},\dots,\mathcal{D}^{(S)} \in \mathcal{D}$ is accessible, DG tasks aim to learn a representation function $f: \mathcal{X} \rightarrow \mathcal{Z}$ followed by a classifier $g: \mathcal{Z} \rightarrow \mathcal{Y}$ that generalizes well on an unseen domain $\mathcal{D}^{(u)} \in \mathcal{D}$, $u \neq 1,2,\dots,S$.

Let $X$ denote the input random variable, $Z=f(X)$ denote the extracted feature random variable, $Y$ denote the label random variable in input space, representation space, and label space, respectively.  We use superscription $^{(i)}$ to denote the variables and functions specified on domain $\mathcal{D}^{(i)}$. For example, we use $p^{(i)}(\bm{x})$, $p^{(i)}(\bm{z})$, $p^{(i)}(\bm{x},\bm{z})$ to denote the distribution of input sample $\bm{x}$, the distribution of feature sample $\bm{z} = f(\bm{x})$, and their joint distribution on $\mathcal{D}^{(i)}$, respectively. We use $p^{(i)}(X,Z)$ and $p^{(i)}(Y,Z)$ to denote the joint distribution between input random variable $X$ and its representation random variable $Z$ and the joint distribution between label random variable $Y$ and representation random variable $Z$ in $\mathcal{D}^{(i)}$. Finally, we use $H(A|B)$ and $I(A;B)$ to denote the conditional entropy and  mutual information between two random variables $A$ and $B$, respectively.

\subsection{Problem formulation}
For given $S$ seen domains, a DG task aims to find an optimal representation function $f^*$ by solving the following optimization problem:
\begin{equation}
\label{eq: domain-invariant}
\begin{aligned}
& \min_{\substack{ f: \mathcal{X} \rightarrow \mathcal{Z}}} \quad  \textup{R}^{(u)}(g_f \circ f)
\end{aligned}
\end{equation}where $\textup{R}^{(u)}(g_f \circ f)$ denotes the risk (classification error) introduced by using a representation map $f$ followed by an optimal classifier $g_f$ on unseen domain $D^{(u)}$. Note that for a given $f: \mathcal{X} \rightarrow \mathcal{Z}$, the optimal classifier $g_f: \mathcal{Z} \rightarrow \mathcal{Y}$ completely depends on $f$.

In this paper, under the information-theoretic point of view, we want to solve the following optimization problem:
\begin{equation}
\label{eq: main equation}
\begin{aligned}
& \max_{\substack{ f: \mathcal{X} \rightarrow \mathcal{Z}}
} \quad I^{(u)}(Y;Z)
\end{aligned}
\end{equation}where $I^{(u)}(Y;Z)$ denotes mutual information between the labels and representation features on unseen domain $D^{(u)}$. It is worth noting that a higher mutual information between representation features and its labels likely leads to a higher classification accuracy. Thus, solving (\ref{eq: main equation}) acts as a proxy for minimizing the classification risk on the unseen domain which is the ultimate goal of DG tasks in (\ref{eq: domain-invariant}).

\section{Preliminary}
\label{sec: pre}

This section provides some definitions and preliminary results that support our main results in Sec. \ref{sec: main results}. 

\subsection{Measure of domain discrepancy}
Under DG settings, one only can access to the data from seen domains, therefore, the most common approach is first to learn domain-invariant features and then design a classifier based on these features \cite{arjovsky2019invariant,lu2021nonlinear,ahuja2021invariance,li2022invariant,du2020learning,ganin2016domain,mahajan2021domain,zhou2020domain,ahuja2020invariant}. For a given divergence measure $D(\cdot || \cdot )$ and a seen domain $\mathcal{D}^{(s)}$, previous works on DG usually aim to (a) enforce covariate-alignment,  \textit{i.e.}, minimizing $D(p^{(u)}(Z) || p^{(s)}(Z))$,  or (b) enforce concept-alignment,  \textit{i.e.}, minimizing $D(p^{(u)}(Y|Z) || p^{(s)}(Y|Z))$\footnote{Under DG settings, one cannot directly align the distribution from the unseen domain, therefore, aligning the distribution over all seen domains is usually used as a proxy to achieve this goal.}.  We, however, want to learn a mapping $f$ to minimize the mismatch between joint distributions of seen  and unseen domains\footnote{Even though this condition is restricted, our examples in Appendix \ref{apd: 1} show that achieving covariate-alignment or concept-alignment alone is not enough to guarantee a small classification risk on unseen domain.}. 

\begin{definition}[Domain discrepancy induced by a representation function]
\label{def: 1}
For a representation function $f: \mathcal{X} \rightarrow \mathcal{Z}$, unseen domain $\mathcal{D}^{(u)}$, and seen domain $\mathcal{D}^{(s)}$, the domain-discrepancy between $\mathcal{D}^{(u)}$ and $\mathcal{D}^{(s)}$ induced by $f$ is:
\begin{equation}
\label{eq: epsilon}
K(f) = D(p^{(u)}(Y,Z) || p^{(s)}(Y,Z))
\end{equation}where $D(\cdot || \cdot)$ is a divergence measure that quantifies the mismatch between two distributions. 
\end{definition}

If the mapping $f$ induces $K(f)=0$, the distributions between seen and unseen domains are perfectly aligned. In practice, one usually wants to enforce $K(f) \leq \epsilon$ where $\epsilon$ is a positive number.

\begin{definition}
\label{def: 3}
Let $W(\epsilon)$ denote the maximum discrepancy between mutual information of unseen domain $\mathcal{D}^{(u)}$ and  seen domain $\mathcal{D}^{(s)}$ when the domain discrepancy $K(f)$ does not exceed a positive number $\epsilon$. Formally,
\begin{equation}
    W(\epsilon) = \max_{ \substack{f: \mathcal{X} \rightarrow \mathcal{Z}, \\K(f) \leq \epsilon}} \big| I^{(u)}(Y;Z) - I^{(s)}(Y;Z) \big|
\end{equation}where $I^{(u)}(Y;Z)$ and $I^{(s)}(Y;Z)$ are mutual information between label $Y$ and representation feature $Z$ in unseen domain and seen domain, respectively. 
\end{definition}
If $\epsilon=0$, $K(f)=0$, $I^{(u)}(Y;Z) = I^{(s)}(Y;Z)$, and $W(0)=0$. In addition, it is possible to verify that $W(\epsilon)$ is a monotonically increasing function of $\epsilon$.

\subsection{Measure of reconstruction loss}
Note that by Data Processing Inequality \cite{cover1999elements} and the fact that $Y \rightarrow X \rightarrow Z$ forms a Markov chain, for any representation function $f$:
\begin{equation}
\label{eq: data processing inequality}
I^{(u)}(Y;X)  \geq I^{(u)}(Y;Z)    
\end{equation}where $I^{(u)}(Y;X)$ and $I^{(u)}(Y;Z)$ denote mutual information between label and input and mutual information between label and feature on unseen domain, respectively. The equality happens in (\ref{eq: data processing inequality}) if $f$ is invertible. 

It is worth noting that there may exist non-invertible representation functions that make the equality happens. Indeed, if the label information can be precisely preserved under mapping $f$, \textit{i.e.}, using $Z$ to predict $Y$ is as good as using $X$ to predict $Y$, for example, if $H^{(u)}(Y|X) = H^{(u)}(Y|Z)$, then $I^{(u)}(Y;X) = I^{(u)}(Y;Z)$. However, under DG settings, there is no information about the data, nor the label from unseen domains. Thus, it is impossible to design such non-invertible mappings that perfectly preserve the label information on unseen domain. On the other hand, (nearly) invertible mappings can be constructed regardless of domains which is a possible way to retain the useful information on unseen domains, \textit{i.e.}, making $I^{(u)}(Y;Z)$ close to $I^{(u)}(Y;X)$. In practice, (nearly) invertible mappings can be enforced by minimizing the reconstruction loss between input data and its representation.

\begin{definition}[Reconstruction loss]
\label{def: 4}
For a representation function $f: \mathcal{X} \rightarrow \mathcal{Z}$, and a function $\theta: \mathcal{Z} \rightarrow \mathcal{X}$, the reconstruction loss (on unseen domain) induced by $f$ and $\theta$ is defined by:
\begin{eqnarray}
    R(f,\theta) &=& \int_{\bm{x} \in \mathcal{X}} p^{(u)}(\bm{x}) \, \ell(\bm{x}, \theta (f(\bm{x})) ) \,d \bm{x} \nonumber\\
    &=& \int_{\bm{x} \in \mathcal{X}} \int_{\bm{z} \in \mathcal{Z}} p^{(u)}(\bm{x},\bm{z}) \, \ell(\bm{x}, \theta(\bm{z}) ) \,d \bm{x} \,d \bm{z}
\end{eqnarray}where $\ell(\cdot,\cdot)$ is a distortion function. 
\end{definition}

Usually, $f$ is called \textit{encoder} and $\theta$ is called \textit{decoder}.

\begin{definition}
\label{def: 5}
Let $Q(\gamma)$ denote the maximum mutual information loss (on unseen domain) when the reconstruction loss induced by encoder $f$ and decoder $\theta$ does not exceed a positive number $\gamma$. Formally,
\begin{equation}
Q(\gamma) = \max_{\substack{f: \mathcal{X} \rightarrow \mathcal{Z}, \\ \theta: \mathcal{Z} \rightarrow \mathcal{X}, \\ R(f, \theta) \leq \gamma}}  I^{(u)}(Y;X) - I^{(u)}(Y;Z).      
\end{equation}
\end{definition}
Note that $\gamma=0$ implies $f$ is invertible, leading to $I^{(u)}(Y;X) = I^{(u)}(Y;Z)$. Therefore, $Q(0)=0$. In addition, it is possible to show that $Q(\gamma)$ is a monotonic increasing function of $\gamma$.

\section{Main results}
\label{sec: main results}

Based on definitions and initial results in Sec. \ref{sec: pre}, we point out the necessity of employing the representation functions such that a small reconstruction loss is induced in order to solve the optimization problem in (\ref{eq: main equation}). More interestingly, we show that there is a trade-off between minimizing the reconstruction loss and aligning the joint distributions between domains.

\begin{proposition}[Main result]
\label{prop: 1}  For unseen domain $\mathcal{D}^{(u)}$, seen domain $\mathcal{D}^{(s)}$, and any encoder $f$ and decoder $\theta$:
 \begin{eqnarray*}
I^{(u)}(Y;Z) \geq  \max \Big[ \, I^{(s)}(Y;Z) &-& W \big( K(f) \big); \\
 I^{(u)}(Y;X) &-& Q \big( R(f,\theta) \big)  \Big].   
\end{eqnarray*}
\end{proposition}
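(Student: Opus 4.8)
The plan is to prove the two lower bounds inside the maximum separately and then combine them, since the maximum of two quantities is bounded below as soon as each of them individually is. Both bounds should follow almost immediately from the extremal definitions of $W(\cdot)$ and $Q(\cdot)$ in Definitions \ref{def: 3} and \ref{def: 5}, the point being that the particular encoder $f$ (respectively the particular encoder--decoder pair $(f,\theta)$) at hand is automatically \emph{feasible} for the corresponding maximization problem, so the value it attains cannot exceed the maximum.

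For the first bound, I would fix the given $f$ and set $\epsilon = K(f)$. Then $f$ trivially satisfies $K(f) \le \epsilon$, so it lies in the feasible set defining $W(\epsilon)$. Since $W(\epsilon)$ is the maximum of $|I^{(u)}(Y;Z) - I^{(s)}(Y;Z)|$ over that set, the value attained by our particular $f$ is dominated by it:
\begin{equation*}
\big| I^{(u)}(Y;Z) - I^{(s)}(Y;Z) \big| \le W\big(K(f)\big).
\end{equation*}
Extracting the branch $I^{(s)}(Y;Z) - I^{(u)}(Y;Z) \le W(K(f))$ and rearranging yields $I^{(u)}(Y;Z) \ge I^{(s)}(Y;Z) - W(K(f))$.

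For the second bound, I would instead fix $(f,\theta)$ and set $\gamma = R(f,\theta)$. Again $(f,\theta)$ is feasible for the problem defining $Q(\gamma)$, and since $Q(\gamma)$ maximizes $I^{(u)}(Y;X) - I^{(u)}(Y;Z)$ over that set, we obtain $I^{(u)}(Y;X) - I^{(u)}(Y;Z) \le Q\big(R(f,\theta)\big)$, i.e.\ $I^{(u)}(Y;Z) \ge I^{(u)}(Y;X) - Q(R(f,\theta))$. Combining the two, $I^{(u)}(Y;Z)$ dominates each right-hand side and therefore their maximum, which is exactly the claimed inequality.

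The argument is essentially a direct feasibility-and-maximality observation, so I do not expect a deep obstacle; the only delicate point is getting the sign/direction of each inequality right. For $W$ the definition carries an absolute value, so I must be careful to peel off the branch $I^{(s)}(Y;Z) - I^{(u)}(Y;Z) \le W(\cdot)$ rather than the reverse one. For $Q$ the definition omits the absolute value, so I should explicitly note that the quantity $I^{(u)}(Y;X) - I^{(u)}(Y;Z)$ being maximized is genuinely the right one and is non-negative by the data-processing inequality in \eqref{eq: data processing inequality}, which is what guarantees the extracted bound points in the correct direction.
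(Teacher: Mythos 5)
Your proposal is correct and is essentially identical to the paper's own proof: both bounds are obtained by observing that the given $f$ (resp.\ $(f,\theta)$) is feasible for the maximization defining $W(K(f))$ (resp.\ $Q(R(f,\theta))$) and then rearranging, exactly as in Appendix~\ref{apd: theorem 1}. Your extra care about peeling off the correct branch of the absolute value in Definition~\ref{def: 3} is a sound clarification of a step the paper states without comment.
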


\begin{proof}
Please see Appendix \ref{apd: theorem 1}.
\end{proof}

Proposition \ref{prop: 1} points out a possible way to solve the optimization problem proposed in (\ref{eq: main equation}). Particularly, to maximize $I^{(u)}(Y;Z)$, one simultaneously needs to (a) maximize $I^{(s)}(Y;Z) - W(K(f))$, and (b) maximize $I^{(u)}(Y;X) -Q(R(f,\theta))$. Since $I^{(u)}(Y;X)$ is a constant, $W(\cdot)$ and $Q(\cdot)$ are monotonically increasing functions, we need to find an encoder $f$ and a decoder $\theta$ to maximize the mutual information on seen domain $I^{(s)}(Y;Z)$, minimize the domain discrepancy $K(f)$, and minimize the reconstruction loss $R(f,\theta)$, at the same time.

In practice, if the invariant features exist, strongly correlate with the label, and can be precisely learned, there may exist a mapping $f$ such that $I^{(s)}(Y;Z)$ is large and $K(f)$ is small which make the first lower bound $I^{(s)}(Y;Z) - W(K(f))$ is tighter than the second lower bound $I^{(u)}(Y;X) - Q(R(f,\theta))$. However, under some failure cases in literature \cite{rosenfeld2021risks,aubin2021linear,kamath2021does,gulrajani2020search}, for example, if the invariant feature does not exist, or if the invariant feature is not strongly correlated with the label, then $K(f)$ is large and $I^{(s)}(Y;Z)$ is small, and the second lower bound $I^{(u)}(Y;X) -Q(R(f,\theta))$ might be the tighter one. Thus, the traditional approaches that simultaneously target to learn the invariant features (minimizing $K(f)$) and minimize the empirical risk (a proxy for maximizing the mutual information on seen domain $I^{(s)}(Y;Z)$) for optimizing the first lower bound, can fail.

Motivated by Proposition \ref{prop: 1}, one wants to design an encoder $f$ and a decoder $\theta$ to simultaneously minimize both $K(f)$ and $R(f, \theta)$. Unfortunately, we show that it is impossible to optimize $K(f)$ and $R(f, \theta)$ at the same time.

\begin{definition}[Reconstruction-alignment function]
\label{def: 7}
For unseen domain $\mathcal{D}^{(u)}$, seen domain $\mathcal{D}^{(s)}$, and a given decoder $\theta$, the reconstruction-alignment function $T(\gamma)$ is defined by:
\begin{small}
\begin{equation}
\label{eq: definition of T(V)}
\begin{aligned}
T(\gamma) &= \min_{f: \mathcal{X} \rightarrow \mathcal{Z}} K(f) = \min_{f: \mathcal{X} \rightarrow \mathcal{Z}} D(p^{(u)}(Y,Z) || p^{(s)}(Y,Z))\\
\textrm{s.t.} \quad & R(f,\theta)=\int_{\bm{x} \in \mathcal{X}} \int_{\bm{z} \in \mathcal{Z}}  p^{(u)}(\bm{x},\bm{z}) \ell(\bm{x}, \theta(\bm{z}) ) \,d \bm{x} \, d \bm{z} \leq \gamma
\end{aligned}
\end{equation}\end{small}where $\gamma$ is a positive number, $\ell(\cdot, \cdot)$ is a distortion measure, and $D(\cdot || \cdot)$ is a divergence measure.
\end{definition}

The reconstruction-alignment function $T(\gamma)$ is the minimal discrepancy between the joint distributions of the unseen domain $\mathcal{D}^{(u)}$ and seen domain $\mathcal{D}^{(s)}$ that can be obtained if the reconstruction loss (on unseen domain) does not exceed a positive number $\gamma$. We formally characterize the trade-off between minimizing reconstruction loss and achieving domain alignment as below. 
\begin{proposition}[Main result]
\label{prop: 2}
If the divergence measure $D(a || b)$ is convex (in both variables $a$ and $b$), then $T(\gamma)$ defined in (\ref{eq: definition of T(V)}) is:
\begin{enumerate}
    \item monotonically non-increasing, and
    \item convex.
\end{enumerate}
\end{proposition}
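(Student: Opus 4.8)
The plan is to establish the two properties separately: monotonicity follows immediately from nesting of feasible sets, while convexity is obtained by reparametrizing the problem so that it becomes a standard convex perturbation (value) function.

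For the monotonicity, I would note that the decoder $\theta$ is held fixed in Definition \ref{def: 7}, so for $\gamma_1 \le \gamma_2$ the feasible set $\{f : R(f,\theta) \le \gamma_1\}$ is contained in $\{f : R(f,\theta) \le \gamma_2\}$. Minimizing the same objective $K(f)$ over a larger set cannot increase the optimum, hence $T(\gamma_2) \le T(\gamma_1)$, which is exactly the claim that $T$ is non-increasing. This step is routine.

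The substance of the argument is convexity, and the key idea is to parametrize the optimization not by the deterministic map $f$ but by the conditional distribution (stochastic channel) $W(\bm z\mid\bm x)=p(\bm z\mid\bm x)$ it induces, so that the feasible set becomes convex. With $\theta$ fixed, three quantities are then affine in $W$: the reconstruction loss $R=\int\!\int p^{(u)}(\bm x)\,W(\bm z\mid \bm x)\,\ell(\bm x,\theta(\bm z))\,d\bm x\,d\bm z$, and the two joint laws $p^{(u)}(\bm y,\bm z)=\int p^{(u)}(\bm x,\bm y)\,W(\bm z\mid\bm x)\,d\bm x$ and $p^{(s)}(\bm y,\bm z)=\int p^{(s)}(\bm x,\bm y)\,W(\bm z\mid\bm x)\,d\bm x$, the latter two because the \emph{same} encoder $W$ acts on both domains. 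Since $D(a\,\|\,b)$ is jointly convex in $(a,b)$ by hypothesis and both arguments are affine in $W$, the composite objective $K=D\big(p^{(u)}(Y,Z)\,\|\,p^{(s)}(Y,Z)\big)$ is convex in $W$, while the constraint $R\le\gamma$ is a single affine inequality. It then remains to run the standard perturbation argument: fix $\gamma_1,\gamma_2$ and $\lambda\in[0,1]$, set $\gamma_\lambda=\lambda\gamma_1+(1-\lambda)\gamma_2$, and take $W_1,W_2$ that achieve (or $\varepsilon$-achieve) $T(\gamma_1),T(\gamma_2)$. The mixture $W_\lambda=\lambda W_1+(1-\lambda)W_2$ is again a valid channel; by linearity of $R$ it satisfies $R(W_\lambda)\le\gamma_\lambda$, so it is feasible for the $\gamma_\lambda$-problem, and by convexity of $K$ it obeys $K(W_\lambda)\le\lambda K(W_1)+(1-\lambda)K(W_2)$. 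Chaining these gives $T(\gamma_\lambda)\le K(W_\lambda)\le\lambda T(\gamma_1)+(1-\lambda)T(\gamma_2)$ (after letting $\varepsilon\to 0$ if near-optimizers were used), which is precisely convexity.

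I expect the main obstacle to be the reparametrization itself. As literally written, Definition \ref{def: 7} minimizes over deterministic maps $f$, and the set of deterministic maps is not convex, so $W_\lambda$ generally corresponds to a \emph{stochastic} encoder rather than a deterministic one. The argument therefore requires either explicitly admitting stochastic encoders (the conventional choice, as in rate--distortion theory) or a time-sharing/convexification justification that the value over deterministic maps coincides with the value over their convex hull. A secondary, more technical point is ensuring the minimum in Definition \ref{def: 7} is attained, so that $W_1,W_2$ may be taken optimal rather than merely $\varepsilon$-optimal; absent an attainment hypothesis, one simply carries the $\varepsilon$ through and passes to the limit at the end.
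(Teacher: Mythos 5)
Your proposal is correct and follows essentially the same route as the paper's proof: monotonicity from nested feasible sets, and convexity by mixing the two (near-)optimizers, using that the reconstruction loss and both joint laws $p^{(u)}(Y,Z)$, $p^{(s)}(Y,Z)$ are affine in the encoder while $D(\cdot\|\cdot)$ is jointly convex, then invoking monotonicity to pass from $T(\gamma_\lambda)$ to $T(\lambda\gamma_1+(1-\lambda)\gamma_2)$. The one place you go beyond the paper is the obstacle you flag: the paper asserts in a footnote that a deterministic $f_\lambda$ inducing the mixture $\lambda p_1(X,Z)+(1-\lambda)p_2(X,Z)$ can be built by ``linear interpolation'' of $f_1$ and $f_2$, which is not justified, whereas your reparametrization over stochastic channels $W(\bm z\mid\bm x)$ is the standard and rigorous way to close exactly that gap.
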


\begin{proof}
Please see Appendix \ref{apd: theorem 2}. 
\end{proof}

Sharing some similarities with rate-distortion theory \cite{cover1999elements}, Proposition \ref{prop: 2} characterizes the trade-off between minimizing the domain discrepancy $K(f)$ and minimizing the reconstruction loss $R(f,\theta)$. Since Proposition \ref{prop: 2} holds for any decoder $\theta$, there is no encoder $f$ and decoder $\theta$ that can perfectly minimize the domain discrepancy and the reconstruction loss.

In addition, though the proof of Proposition \ref{prop: 2} is constructed by considering the reconstruction loss on unseen domain, a similar proof holds for seen domains, \textit{i.e.}, there is a universal trade-off between minimizing the domain discrepancy and minimizing the reconstruction loss regardless of domain. Finally, it is worth noting that the assumption about the convexity of the divergence $D(\cdot || \cdot)$ is not too restricted in practice. Indeed, most of the divergence measures, for example, the Kullback-Leibler divergence, are convex \cite{cover1999elements}.

\section{Practical Approach}
\label{sec: practical method}

In this section, motivated by Proposition \ref{prop: 1}, we propose a framework that simultaneously optimizes the domain discrepancy, the reconstruction loss, and the empirical risk on seen domains\footnote{Here, minimizing the empirical risk (on seen domains) is considered as a proxy for maximizing the mutual information between label and representation features. Using mutual information as a direct objective function will be kept in our future work.}. Specifically, we want to minimize the following loss function:
\begin{equation}
\label{eq: practical objective function}
\min_{f, g_f, \theta} \sum_{i=1}^{M} \textup{R}^{(i)}(g_f \circ f) + \alpha L_{\textup{discrepancy}}(f)  + \beta L_{\textup{reconstruction}}(f,\theta),
\end{equation}where the first term is the empirical classification risk over $M$ seen domains, the second term denotes the domain discrepancy, and the third term represents the reconstruction loss.  $\alpha$, and $\beta$ are two positive hyper-parameters that control the trade-off between minimizing these three loss terms. 

Compared to most of the existing works in DG, the main difference of our objective function in (\ref{eq: practical objective function}) comes from the reconstruction loss term which is added in light of Proposition \ref{prop: 1} to preserve the information between the latent representation and its labels on unseen domain. Therefore,  (\ref{eq: practical objective function}) can be practically optimized by adding a decoder (for optimizing the reconstruction loss) into the well-established existing DG models that already handle the empirical risk and the domain discrepancy terms. Practically, we employ the following DG methods: Invariant Risk Minimization (IRM) algorithm  \cite{arjovsky2019invariant}, Maximum Mean Discrepancy (MMD) algorithm \cite{li2018domain}, CORrelation ALignment (CORAL) algorithm  \cite{sun2016deep}, Invariant Risk Minimization-Maximum Mean Discrepancy (IRM-MMD) algorithm \cite{guo2021out}, Information Bottleneck-Invariant Risk Minimization (IB-IRM) algorithm \cite{ahuja2021invariance}, Empirical Risk Minimization (ERM) algorithm \cite{vapnik1999overview}, and Conditional Entropy Minimization (CEM) algorithm \cite{nguyen2022conditional} to minimize the first two terms in (\ref{eq: practical objective function})\footnote{Due to the limited time, we randomly select some algorithms from recent works on DG to add the reconstruction loss term. We encourage the reader to find the details of these algorithms in \cite{arjovsky2019invariant,li2018domain,sun2016deep,guo2021out,ahuja2021invariance,nguyen2022conditional}.}. 

To minimize the reconstruction loss term in (\ref{eq: practical objective function}), we train an encoder $f: \mathcal{X} \rightarrow \mathcal{Z}$ together with a decoder $\theta: \mathcal{Z} \rightarrow \mathcal{X}$ to minimize:
\begin{equation}
\label{eq: recon-loss on seen domain}
  L_{\textup{reconstruction}}(f,\theta)  = \sum_{i=1}^{M}   \int_{\bm{x} \in \mathcal{X}} p^{(i)}(\bm{x}) \ell(\bm{x}, \theta (f(\bm{x})) ) \, d \bm{x}  ,
\end{equation}where the squared-Euclidean distance is selected as the distortion measure, \textit{i.e.}, $\ell(a,b)=(a-b)^2$, and $p^{(i)}(\bm{x})$ denotes the input distribution on domain $\mathcal{D}^{(i)}$, $i=1,2,\dots,M$.

By adding the reconstruction loss term into IRM, MMD, CORAL, IRM-MMD, IB-IRM, ERM, and CEM algorithms, the following new algorithms are constructed: IRM-Rec, MMD-Rec, CORAL-Rec, IRM-MMD-Rec,  IB-IRM-Rec, ERM-Rec, and CEM-Rec, respectively. One of the advantages of employing multiple algorithms for dealing with the first two terms in (\ref{eq: practical objective function}) is that it allows us to evaluate the effectiveness of combining the reconstruction-loss term on a variety of DG methods. Indeed, our numerical results in the next section show that adding the reconstruction loss term leads to improvements in the accuracy of existing DG methods.

\section{Experiments}
\label{sec: numerical results}
\subsection{Datasets}

\textbf{Colored-MNIST (CMNIST) \cite{arjovsky2019invariant}}. The CMNIST dataset is a common DG dataset which was first proposed in \cite{arjovsky2019invariant}. The learning task is to classify a colored digit into two classes: the digit is less than or equal to four or the digit is strictly greater than four. There are three domains in CMNIST, two domains contain 25,000 images each and one domain contains 20,000 images. Here, the color is considered as a spurious feature which is added in a way such that the label is more correlated with the color than with the digit. Due to a strong spurious correlation between colors and labels, any algorithm simply aims to minimize the training error will tend to discover the color rather than the shape of the digit (on seen domains) and therefore fail in the test on unseen domains. More details about the CMNIST dataset can be found in \cite{arjovsky2019invariant}.

\textbf{Covariate-Shift-CMNIST (CS-CMNIST) \cite{ahuja2021empirical}.} The CS-CMNIST dataset is a dataset derived from CMNIST dataset which was first introduced in \cite{ahuja2021empirical}. There are 10 classes in CS-CMNIST dataset where each class corresponds to a digit from zero to nine and each digit is associated with a single color. There are three domains in CS-CMNIST: two training domains and one testing domain, each containing 20,000 images. The color is considered the spurious feature and is added in a way such that the color is more correlated to digits on seen domains than on unseen domains.  More detail about CS-CMNIST can be found in \cite{ahuja2021invariance}, \cite{ahuja2021empirical}. 

\subsection{Compared Methods}

As previously discussed in Sec. \ref{sec: practical method}, by adding the reconstruction loss term, we compare the proposed IRM-Rec, MMD-Rec, CORAL-Rec, IRM-MMD-Rec, IB-IRM-Rec, ERM-Rec, and CEM-Rec algorithms against their original IRM \cite{arjovsky2019invariant}, MMD \cite{li2018domain}, CORAL \cite{sun2016deep}, IRM-MMD \cite{guo2021out}, IB-IRM \cite{ahuja2021invariance}, ERM \cite{vapnik1999overview}, and CEM \cite{nguyen2022conditional} algorithms. 

\subsection{Implementation Details}

For the CMNIST dataset, we utilize the excellent implementation in Domainbed \cite{gulrajani2020search} that employs the MNIST-ConvNet with four convolutional layers as the learning model. 20 trials corresponding to 20 pairs of hyper-parameters $\alpha$ and $\beta$ are randomly selected in $[10^{-1}, 10^4]$. For each trial, the learning rate is randomly picked in $[10^{-4.5},10^{-3.5}]$ while the batch size is randomly selected in $[2^{3},2^{9}]$. 

Since the CS-CMNIST dataset is not available in Domainbed \cite{gulrajani2020search}, we follow the implementation proposed in \cite{ahuja2021invariance} where the learning model is composed of three convolutional layers with feature map dimensions of 256, 128, and 64, respectively. The last layer (linear layer) is used to classify the colored digit back to 10 classes corresponding to 10 digits from zero to nine. We use an SGD optimizer for training with a batch size fixed to 128, the learning rate fixed to $10^{-1}$ and decay every 600 steps with the total number of steps set to 2,000. In contrast to CMNIST, a grid search is performed in CS-CMNIST with $\alpha, \beta \in \{ 0.1, 1, 10, 10^2, 10^3, 10^4\}$.

The training-domain validation set procedure is used for model selection, \textit{i.e.}, selecting the hyper-parameters (the models) that induce the highest validation accuracy on the validation set sampled from seen domains \cite{gulrajani2020search,ahuja2021invariance}.

We repeat the whole experiment three times for CMNIST and five times for CS-CMNIST via selecting different random seeds\footnote{We follow the settings in \cite{gulrajani2020search,ahuja2021invariance}. Particularly,  in \cite{gulrajani2020search}, the experiment is repeated three times while in \cite{ahuja2021invariance}, the experiment is repeated five times.}. For each selected random seed, the whole process of hyper-parameters tuning and model selection is repeated. After the whole process is finished, only the average accuracy and its corresponding standard deviation are reported. Our code can be found at \href{https://github.com/thuan2412/tradeoff_between_domain_alignment_and_reconstruction_loss}{this link}\footnote{\url{https://github.com/thuan2412/tradeoff_between_domain_alignment_and_reconstruction_loss}}.

\subsection{Results and Discussion}
\begin{table}[h]
\centering
\renewcommand{\arraystretch}{1.5}
\scalebox{0.7}{\resizebox{\columnwidth}{!}{\begin{tabular}{ c c c c c}
\hline
Algorithm  & IRM \cite{arjovsky2019invariant} & IB-IRM \cite{ahuja2021invariance}  &  MMD-IRM \cite{guo2021out} & CEM \cite{nguyen2022conditional}\\ 
\hline
Accuracy   & 61.5 $\mp$ 1.5                   &  71.8 $\mp$ 0.7                    & 77.2 $\mp$ 0.9             & 85.7 $\mp$ 0.9  \\ 
\hline
Algorithm  & IRM-Rec                          & IB-IRM-Rec                         & MMD-IRM-Rec                & CEM-Rec     \\ 
\hline
Accuracy   & 71.0 $\mp$ 0.8                   &  75.6 $\mp$ 1.1                    & 79.7 $\mp$ 0.6             & 87.1 $\mp$ 1.3  \\ 
\hline
\end{tabular}}}
\vspace{2 pt}
\caption{Average accuracy (\%) of compared methods on CS-CMNIST dataset.}
\label{table: 1}
\end{table}

\begin{table}[h]
\centering
\renewcommand{\arraystretch}{1.5}
\scalebox{0.7}{\resizebox{\columnwidth}{!}{\begin{tabular}{ c c c c c}
\hline
Algorithm  & IRM \cite{arjovsky2019invariant} & MMD \cite{li2018domain} & ERM \cite{vapnik1999overview} & CORAL \cite{sun2016deep}  \\ 
\hline
Accuracy   &  52.0 $\mp$ 0.1                  & 51.5 $\mp$ 0.2          & 51.5 $\mp$ 0.1               &  51.5 $\mp$ 0.1          \\ 
\hline
Algorithm  & IRM-Rec                          & MMD-Rec                 & ERM-Rec                      & CORAL-Rec               \\ 
\hline
Accuracy   & 51.7 $\mp$ 0.2                   & 51.7 $\mp$ 0.1          & 51.8 $\mp$ 0.1               & 52.0 $\mp$ 0.1      \\ 
\hline
\end{tabular}}}
\vspace{2 pt}
\caption{Average accuracy (\%) of compared methods on CMNIST dataset.}
\label{table: 2}
\end{table}

Table \ref{table: 1} and \ref{table: 2} provide the accuracy of compared methods on CS-CMNIST dataset and  CMNIST dataset, respectively. 

As seen, for the CS-CMNIST dataset, the accuracy of all four tested algorithms has been improved when the reconstruction loss term is added. Particularly, the lowest improvement is 1.4\% observed from CEM algorithm \cite{nguyen2022conditional}, while the largest improvement appears in IRM algorithm \cite{arjovsky2019invariant} with the gain of 9.5\%. We believe that the difference in the improvement between the tested algorithms can be explained by Proposition \ref{prop: 1}. Indeed, it seems like the original CEM algorithm \cite{nguyen2022conditional} already works well on CS-CMNIST, therefore, the first lower bound in Proposition \ref{prop: 1} induced by CEM is pretty tight which leads to a small improvement in accuracy when the reconstruction loss term is added. On the other hand, since IRM algorithm \cite{arjovsky2019invariant} performs poorly on CS-CMNIST, we suspect that the first lower bound in Proposition \ref{prop: 1} induced by IRM is the looser one, leading to a large improvement when the reconstruction loss term is added for optimizing the second bound. 

Compared to CS-CMNIST, CMNIST is a more challenging dataset where all tested algorithms perform poorly. Indeed, because there exists a strong spurious correlation between the colors and the labels of digits in CMNIST, there is no algorithm that works well on CMNIST \cite{gulrajani2020search}. 
However, as observed from Table \ref{table: 2}, three out of four tested algorithms have been improved by adding the reconstruction loss term when tested on CMNIST dataset. Even though the improvement is not substantial with the largest margin being only 0.5\% from the CORAL algorithm, this still demonstrates the usefulness of optimizing the reconstruction loss term in DG. 

It is worth noting that the numerical results for IRM, MMD, CORAL, and ERM on the CMNIST dataset are collected from \cite{gulrajani2020search} while the numerical results for IRM, IB-IRM, and CEM on the CS-CMNIST dataset are collected from \cite{nguyen2022conditional}. Since the source code of MMD-IRM \cite{guo2021out} was not released, we implemented this algorithm by our-self in order to construct the MMD-IRM-Rec algorithm.

Finally, our future work will focus on integrating the reconstruction loss into other state-of-the-art DG algorithms. Of course, using mutual information as a direct objective function instead of empirical risk will also be considered as one of our future works.

\section{Conclusions}
\label{sec: conclusion}
In this paper, we showed that learning domain-invariant representation features is necessary in DG but insufficient to preserve the mutual information between label and representation features on unseen domains. This fact suggests for imposing a constraint on the reconstruction loss between the input and its latent representation in order to preserve most of the relevant information about labels. More importantly, we also point out the trade-off between minimizing the reconstruction loss and achieving domain alignment in DG. In other words, it is impossible to minimize both the reconstruction loss and the domain discrepancy at the same time. Our theoretical results motivate a new practical framework that jointly accounts for both the reconstruction loss and the domain discrepancy to learn the optimal representation features. In practice, our proposed algorithms can provide a comparable or slightly better performance compared to state-of-the-art DG methods. 

\bibliography{example_paper_1}

\begin{thebibliography}{10}
\providecommand{\url}[1]{#1}
\csname url@samestyle\endcsname
\providecommand{\newblock}{\relax}
\providecommand{\bibinfo}[2]{#2}
\providecommand{\BIBentrySTDinterwordspacing}{\spaceskip=0pt\relax}
\providecommand{\BIBentryALTinterwordstretchfactor}{4}
\providecommand{\BIBentryALTinterwordspacing}{\spaceskip=\fontdimen2\font plus
\BIBentryALTinterwordstretchfactor\fontdimen3\font minus
  \fontdimen4\font\relax}
\providecommand{\BIBforeignlanguage}[2]{{%
\expandafter\ifx\csname l@#1\endcsname\relax
\typeout{** WARNING: IEEEtran.bst: No hyphenation pattern has been}%
\typeout{** loaded for the language `#1'. Using the pattern for}%
\typeout{** the default language instead.}%
\else
\language=\csname l@#1\endcsname
\fi
#2}}
\providecommand{\BIBdecl}{\relax}
\BIBdecl

\bibitem{wang2022generalizing}
J.~Wang, C.~Lan, C.~Liu, Y.~Ouyang, T.~Qin, W.~Lu, Y.~Chen, W.~Zeng, and P.~Yu,
  ``Generalizing to unseen domains: A survey on domain generalization,''
  \emph{IEEE Transactions on Knowledge and Data Engineering}, 2022.

\bibitem{zhou2021domain}
K.~Zhou, Z.~Liu, Y.~Qiao, T.~Xiang, and C.~C. Loy, ``Domain generalization: A
  survey,'' \emph{arXiv preprint arXiv:2103.02503}, 2021.

\bibitem{johansson2019support}
F.~D. Johansson, D.~Sontag, and R.~Ranganath, ``Support and invertibility in
  domain-invariant representations,'' in \emph{The 22nd International
  Conference on Artificial Intelligence and Statistics}.\hskip 1em plus 0.5em
  minus 0.4em\relax PMLR, 2019, pp. 527--536.

\bibitem{lyu2021barycentric}
B.~Lyu, T.~Nguyen, P.~Ishwar, M.~Scheutz, and S.~Aeron, ``Barycentric-alignment
  and invertibility for domain generalization,'' \emph{arXiv preprint
  arXiv:2109.01902}, 2021.

\bibitem{arjovsky2019invariant}
M.~Arjovsky, L.~Bottou, I.~Gulrajani, and D.~Lopez-Paz, ``Invariant risk
  minimization,'' \emph{arXiv preprint arXiv:1907.02893}, 2019.

\bibitem{lu2021nonlinear}
C.~Lu, Y.~Wu, J.~M. Hern{\'a}ndez-Lobato, and B.~Sch{\"o}lkopf, ``Nonlinear
  invariant risk minimization: A causal approach,'' \emph{arXiv preprint
  arXiv:2102.12353}, 2021.

\bibitem{ahuja2021invariance}
K.~Ahuja, E.~Caballero, D.~Zhang, J.-C. Gagnon-Audet, Y.~Bengio, I.~Mitliagkas,
  and I.~Rish, ``Invariance principle meets information bottleneck for
  out-of-distribution generalization,'' \emph{Advances in Neural Information
  Processing Systems}, vol.~34, pp. 3438--3450, 2021.

\bibitem{li2022invariant}
B.~Li, Y.~Shen, Y.~Wang, W.~Zhu, D.~Li, K.~Keutzer, and H.~Zhao, ``Invariant
  information bottleneck for domain generalization,'' in \emph{Proceedings of
  the AAAI Conference on Artificial Intelligence}, vol.~36, no.~7, 2022, pp.
  7399--7407.

\bibitem{du2020learning}
Y.~Du, J.~Xu, H.~Xiong, Q.~Qiu, X.~Zhen, C.~G. Snoek, and L.~Shao, ``Learning
  to learn with variational information bottleneck for domain generalization,''
  in \emph{European Conference on Computer Vision}.\hskip 1em plus 0.5em minus
  0.4em\relax Springer, 2020, pp. 200--216.

\bibitem{ganin2016domain}
Y.~Ganin, E.~Ustinova, H.~Ajakan, P.~Germain, H.~Larochelle, F.~Laviolette,
  M.~Marchand, and V.~Lempitsky, ``Domain-adversarial training of neural
  networks,'' \emph{The journal of machine learning research}, vol.~17, no.~1,
  pp. 2096--2030, 2016.

\bibitem{mahajan2021domain}
D.~Mahajan, S.~Tople, and A.~Sharma, ``Domain generalization using causal
  matching,'' in \emph{International Conference on Machine Learning}.\hskip 1em
  plus 0.5em minus 0.4em\relax PMLR, 2021, pp. 7313--7324.

\bibitem{zhou2020domain}
F.~Zhou, Z.~Jiang, C.~Shui, B.~Wang, and B.~Chaib-draa, ``Domain generalization
  with optimal transport and metric learning,'' \emph{arXiv preprint
  arXiv:2007.10573}, 2020.

\bibitem{ahuja2020invariant}
K.~Ahuja, K.~Shanmugam, K.~Varshney, and A.~Dhurandhar, ``Invariant risk
  minimization games,'' in \emph{International Conference on Machine
  Learning}.\hskip 1em plus 0.5em minus 0.4em\relax PMLR, 2020, pp. 145--155.

\bibitem{nagarajan2020understanding}
V.~Nagarajan, A.~Andreassen, and B.~Neyshabur, ``Understanding the failure
  modes of out-of-distribution generalization,'' \emph{arXiv preprint
  arXiv:2010.15775}, 2020.

\bibitem{bui2021exploiting}
M.-H. Bui, T.~Tran, A.~Tran, and D.~Phung, ``Exploiting domain-specific
  features to enhance domain generalization,'' \emph{Advances in Neural
  Information Processing Systems}, vol.~34, 2021.

\bibitem{nguyen2022conditional}
T.~Nguyen, B.~Lyu, P.~Ishwar, M.~Scheutz, and S.~Aeron, ``Conditional entropy
  minimization principle for learning domain invariant representation
  features,'' \emph{arXiv preprint arXiv:2201.10460}, 2022.

\bibitem{chen2021iterative}
Y.~Chen, E.~Rosenfeld, M.~Sellke, T.~Ma, and A.~Risteski, ``Iterative feature
  matching: Toward provable domain generalization with logarithmic
  environments,'' \emph{arXiv preprint arXiv:2106.09913}, 2021.

\bibitem{rosenfeld2021risks}
E.~Rosenfeld, P.~Ravikumar, and A.~Risteski, ``The risks of invariant risk
  minimization,'' in \emph{International Conference on Learning
  Representations}, vol.~9, 2021.

\bibitem{aubin2021linear}
B.~Aubin, A.~S{\l}owik, M.~Arjovsky, L.~Bottou, and D.~Lopez-Paz, ``Linear
  unit-tests for invariance discovery,'' \emph{arXiv preprint
  arXiv:2102.10867}, 2021.

\bibitem{kamath2021does}
P.~Kamath, A.~Tangella, D.~Sutherland, and N.~Srebro, ``Does invariant risk
  minimization capture invariance?'' in \emph{International Conference on
  Artificial Intelligence and Statistics}.\hskip 1em plus 0.5em minus
  0.4em\relax PMLR, 2021, pp. 4069--4077.

\bibitem{gulrajani2020search}
I.~Gulrajani and D.~Lopez-Paz, ``In search of lost domain generalization,'' in
  \emph{International Conference on Learning Representations}, 2020.

\bibitem{russo2018source}
P.~Russo, F.~M. Carlucci, T.~Tommasi, and B.~Caputo, ``From source to target
  and back: symmetric bi-directional adaptive gan,'' in \emph{Proceedings of
  the IEEE Conference on Computer Vision and Pattern Recognition}, 2018, pp.
  8099--8108.

\bibitem{hu2018duplex}
L.~Hu, M.~Kan, S.~Shan, and X.~Chen, ``Duplex generative adversarial network
  for unsupervised domain adaptation,'' in \emph{Proceedings of the IEEE
  Conference on Computer Vision and Pattern Recognition}, 2018, pp. 1498--1507.

\bibitem{muandet2013domain}
K.~Muandet, D.~Balduzzi, and B.~Sch{\"o}lkopf, ``Domain generalization via
  invariant feature representation,'' in \emph{International Conference on
  Machine Learning}.\hskip 1em plus 0.5em minus 0.4em\relax PMLR, 2013, pp.
  10--18.

\bibitem{li2018domain}
H.~Li, S.~J. Pan, S.~Wang, and A.~C. Kot, ``Domain generalization with
  adversarial feature learning,'' in \emph{Proceedings of the IEEE Conference
  on Computer Vision and Pattern Recognition}, 2018, pp. 5400--5409.

\bibitem{ghifary2016scatter}
M.~Ghifary, D.~Balduzzi, W.~B. Kleijn, and M.~Zhang, ``Scatter component
  analysis: A unified framework for domain adaptation and domain
  generalization,'' \emph{IEEE transactions on pattern analysis and machine
  intelligence}, vol.~39, no.~7, pp. 1414--1430, 2016.

\bibitem{erfani2016robust}
S.~Erfani, M.~Baktashmotlagh, M.~Moshtaghi, X.~Nguyen, C.~Leckie, J.~Bailey,
  and R.~Kotagiri, ``Robust domain generalisation by enforcing distribution
  invariance,'' in \emph{Proceedings of the Twenty-Fifth International Joint
  Conference on Artificial Intelligence (IJCAI-16)}.\hskip 1em plus 0.5em minus
  0.4em\relax AAAI Press, 2016, pp. 1455--1461.

\bibitem{jin2020feature}
X.~Jin, C.~Lan, W.~Zeng, and Z.~Chen, ``Feature alignment and restoration for
  domain generalization and adaptation,'' \emph{arXiv preprint
  arXiv:2006.12009}, 2020.

\bibitem{blanchard2021domain}
G.~Blanchard, A.~A. Deshmukh, U.~Dogan, G.~Lee, and C.~Scott, ``Domain
  generalization by marginal transfer learning,'' \emph{Journal of Machine
  Learning Research}, vol.~22, pp. 1--55, 2021.

\bibitem{shen2018wasserstein}
J.~Shen, Y.~Qu, W.~Zhang, and Y.~Yu, ``Wasserstein distance guided
  representation learning for domain adaptation,'' in \emph{Proceedings of the
  AAAI Conference on Artificial Intelligence}, vol.~32, no.~1, 2018.

\bibitem{sun2016deep}
B.~Sun and K.~Saenko, ``Deep coral: Correlation alignment for deep domain
  adaptation,'' in \emph{European conference on computer vision}.\hskip 1em
  plus 0.5em minus 0.4em\relax Springer, 2016, pp. 443--450.

\bibitem{wang2021respecting}
Z.~Wang, M.~Loog, and J.~van Gemert, ``Respecting domain relations: Hypothesis
  invariance for domain generalization,'' in \emph{2020 25th International
  Conference on Pattern Recognition (ICPR)}.\hskip 1em plus 0.5em minus
  0.4em\relax IEEE, 2021, pp. 9756--9763.

\bibitem{zhao2020domain}
S.~Zhao, M.~Gong, T.~Liu, H.~Fu, and D.~Tao, ``Domain generalization via
  entropy regularization,'' \emph{Advances in Neural Information Processing
  Systems}, vol.~33, pp. 16\,096--16\,107, 2020.

\bibitem{salaudeen2021exploiting}
O.~E. Salaudeen and O.~O. Koyejo, ``Exploiting causal chains for domain
  generalization,'' in \emph{NeurIPS 2021 Workshop on Distribution Shifts:
  Connecting Methods and Applications}, 2021.

\bibitem{nguyen2022joint}
T.~Nguyen, B.~Lyu, P.~Ishwar, M.~Scheutz, and S.~Aeron, ``Joint
  covariate-alignment and concept-alignment: a framework for domain
  generalization,'' \emph{arXiv preprint arXiv:2208.00898}, 2022.

\bibitem{yang2020class}
W.~Yang, T.~Ling, C.~Yang, L.~Wang, Y.~Shi, L.~Zhou, and M.~Yang, ``Class
  distribution alignment for adversarial domain adaptation,'' \emph{arXiv
  preprint arXiv:2004.09403}, 2020.

\bibitem{guo2021out}
R.~Guo, P.~Zhang, H.~Liu, and E.~Kiciman, ``Out-of-distribution prediction with
  invariant risk minimization: The limitation and an effective fix,''
  \emph{arXiv preprint arXiv:2101.07732}, 2021.

\bibitem{cover1999elements}
T.~M. Cover, \emph{Elements of information theory}.\hskip 1em plus 0.5em minus
  0.4em\relax John Wiley \& Sons, 1999.

\bibitem{vapnik1999overview}
V.~N. Vapnik, ``An overview of statistical learning theory,'' \emph{IEEE
  transactions on neural networks}, vol.~10, no.~5, pp. 988--999, 1999.

\bibitem{ahuja2021empirical}
K.~Ahuja, J.~Wang, A.~Dhurandhar, K.~Shanmugam, and K.~R. Varshney, ``Empirical
  or invariant risk minimization? a sample complexity perspective,'' in
  \emph{International Conference on Learning Representations}, 2021.

\end{thebibliography}
\bibliographystyle{IEEEtran}

\appendix
\balance
\section{Covariate-alignment or concept-alignment alone is not sufficient to achieve Domain Generalization}
\label{apd: 1}

In this section, we provide two small examples to show that achieving covariate-alignment or  concept-alignment alone is not enough to minimize the classification error on unseen domains. 

\begin{example}[Covariate-alignment alone is not enough]
Suppose that there exits a mapping $f: \mathcal{X} \rightarrow \mathcal{Z}$ such that the marginal distributions of seen and unseen domains in the latent space are perfectly aligned. Particularly, we assume that $\mathcal{Z}=\{0,1\}$, and $p^{(s)}(Z=0) = p^{(u)}(Z=0) = p^{(s)}(Z=1) = p^{(u)}(Z=1) = 0.5$. Next, suppose that there is a mismatch between the conditional distribution between two domains (inadequate concept-alignment), for example, 
\begin{eqnarray*}
& p^{(s)}(Y=0|Z=0)=0.9, \\
& p^{(s)}(Y=1|Z=0)=0.1, \\
& p^{(s)}(Y=0|Z=1)=0.1, \\
& p^{(s)}(Y=1|Z=1)=0.9,
\end{eqnarray*}
and
\begin{eqnarray*}
& p^{(u)}(Y=0|Z=0)=0.1, \\
& p^{(u)}(Y=1|Z=0)=0.9, \\
& p^{(u)}(Y=0|Z=1)=0.9, \\
& p^{(u)}(Y=1|Z=1)=0.1.
\end{eqnarray*}

If one trains a maximum likelihood classifier $g: \mathcal{Z} \rightarrow \mathcal{Y}$ on seen domain, then $g(Z=0)=0$ and $g(Z=1)=1$. The classification error on the seen domain induced by $f$ and $g$ is: 
\begin{eqnarray*}
\textup{R}^{(s)}(g \circ f) &=& p^{(s)}(Z=0) \big[1-p^{(s)}(Y=0|Z=0) \big] \\
&+& p^{(s)}(Z=1) \big[ 1-p^{(s)}(Y=1|Z=1) \big]\\
&=& 0.1.  
\end{eqnarray*}
  
Next, if one applies this pre-trained classifier $g$ to the unseen domain, the classification error is:
\begin{eqnarray*}
\textup{R}^{(u)}(g \circ f) &=& p^{(u)}(Z=0) \big[ 1-p^{(u)}(Y=0|Z=0) \big] \\
&+& p^{(u)}(Z=1) \big[1-p^{(u)}(Y=1|Z=1) \big] \\
&=& 0.9.  
\end{eqnarray*}

Therefore, covariate-alignment alone is not sufficient to guarantee a low classification error on unseen domain.
\end{example}

\begin{example}[Concept-alignment alone is not enough]
Suppose that there exits a mapping $f: \mathcal{X} \rightarrow \mathcal{Z}$ such that the conditional distributions of seen and unseen domains in the latent space are perfectly aligned. Particularly, we assume that $\mathcal{Z}=\{0,1\}$, and
\begin{eqnarray*}
&&p^{(s)}(Y=0|Z=0)=p^{(u)}(Y=0|Z=0)=0.9, \\
&&p^{(s)}(Y=1|Z=0)=p^{(u)}(Y=1|Z=0)=0.1, \\
&&p^{(s)}(Y=0|Z=1)=p^{(u)}(Y=0|Z=1)=0.49, \\
&&p^{(s)}(Y=1|Z=1)=p^{(u)}(Y=1|Z=1)=0.51.
\end{eqnarray*}
Next, suppose that there is a mismatch between the marginal distribution of two domains (inadequate covariate-alignment), for example, $p^{(s)}(Z=0)=0.9$, $p^{(s)}(Z=1)=0.1$ while $p^{(u)}(Z=0)=0.1$, $p^{(u)}(Z=1)=0.9$. 

If one trains a maximum likelihood classifier $g: \mathcal{Z} \rightarrow \mathcal{Y}$ on seen domain, then $g(Z=0)=0$ and $g(Z=1)=1$. The classification error on seen domain induced by $f$ and $g$ is: 
\begin{eqnarray*}
\textup{R}^{(s)}(g \circ f) &=& p^{(s)}(Z=0) \big[1-p^{(s)}(Y=0|Z=0) \big] \\
&+& p^{(s)}(Z=1) \big[ 1-p^{(s)}(Y=1|Z=1) \big] \\
&=& 0.139.  
\end{eqnarray*}
  
Next, if one transfers this pre-trained classifier $g$ to the unseen domain, the classification error is:
\begin{eqnarray*}
\textup{R}^{(u)}(g \circ f) &=& p^{(u)}(Z=0) \big[ 1-p^{(u)}(Y=0|Z=0) \big] \\
&+& p^{(u)}(Z=1) \big[1-p^{(u)}(Y=1|Z=1) \big] \\
&=& 0.451.  
\end{eqnarray*}

Therefore, achieving concept-alignment alone does not ensure a low classification error on unseen domain. 

\end{example}

\section{Proof of Proposition \ref{prop: 1}}
\label{apd: theorem 1}
First, from Definition \ref{def: 3}, for a given $f$:
\begin{eqnarray}
\label{eq: tata0}
W(K(f)) \geq I^{(s)}(Y;Z) - I^{(u)}(Y;Z),  
\end{eqnarray}which is equivalent to:
\begin{eqnarray}
\label{eq: tata}
I^{(u)}(Y;Z)  \geq I^{(s)}(Y;Z) - W(K(f)). 
\end{eqnarray}

Next, from Definition \ref{def: 5}, for given $f$ and $\theta$:
\begin{equation}
Q(R(f,\theta)) \geq I^{(u)}(Y;X) -  I^{(u)}(Y;Z)
\end{equation}which is equivalent to:
\begin{equation}
\label{eq: tata1}
I^{(u)}(Y;Z) \geq I^{(u)}(Y;X) - Q(R(f,\theta)). 
\end{equation}

Combine (\ref{eq: tata}) and (\ref{eq: tata1}), the proof follows.

\section{Proof of Proposition \ref{prop: 2}}
\label{apd: theorem 2}
First, it is worth noting that our proof closely follows to the proof of rate-distortion theory in \cite{cover1999elements}. 

Particularly, consider two positive numbers $\gamma_1$ and $\gamma_2$, and assume that $\gamma_1 \leq \gamma_2$. For a given decoder $\theta$, let $\mathcal{F}_{\gamma_1}$ and $\mathcal{F}_{\gamma_2}$ denote the sets of representation functions $f$ such that $R(f,\theta) \leq \gamma_1$ and $R(f,\theta) \leq \gamma_2$, respectively. From $\gamma_1 \leq \gamma_2$, $\mathcal{F}_{\gamma_1} \subset \mathcal{F}_{\gamma_2}$. Therefore:
\begin{eqnarray}
T(\gamma_1) = \min_{f \in \mathcal{F}_{\gamma_1}} K(f) \geq \min_{f \in \mathcal{F}_{\gamma_2}} K(f) = T(\gamma_2).
\end{eqnarray}

Thus, $T(\gamma)$ is a monotonically non-increasing function of $\gamma$.

Next, let:
\begin{equation}
\begin{aligned}
f_1 \quad &= \argmin_{f: \mathcal{X} \rightarrow \mathcal{Z}} K(f) \quad
\textrm{s.t.} \quad & R(f,\theta) \leq \gamma_1, \label{eq: tata20}
\end{aligned}
\end{equation}
\begin{equation}
\begin{aligned}
f_2 \quad &= \argmin_{f: \mathcal{X} \rightarrow \mathcal{Z}} K(f) \quad
\textrm{s.t.} \quad & R(f,\theta) \leq \gamma_2. \label{eq: tata21}
\end{aligned}
\end{equation}

Let $p^{(u)}_1(Y,Z)$, $p^{(s)}_1(Y,Z)$ be the corresponding joint distributions of $Y$ and $Z$ on unseen and seen domain introduced by $f_1$, and  $p^{(u)}_2(Y,Z)$, $p^{(s)}_2(Y,Z)$ be the corresponding joint distributions of $Y$ and $Z$ on unseen and seen domain introduced by $f_2$, respectively. 

Let $p^{(u)}_1(X,Z)$, $p^{(s)}_1(X,Z)$ be the corresponding joint distributions of $X$ and $Z$ on unseen and seen domain introduced by $f_1$, and  $p^{(u)}_2(X,Z)$, $p^{(s)}_2(X,Z)$ be the corresponding joint distributions of $X$ and $Z$ on unseen and seen domain introduced by $f_2$, respectively.

Note that for any representation function $f$, we have $p^{(u)}(Y,Z)=p^{(u)}(Y|X) \, p^{(u)}(X,Z)$ and $p^{(s)}(Y,Z)=p^{(s)}(Y|X) \, p^{(s)}(X,Z)$ where $p^{(u)}(Y|X)$ and $p^{(s)}(Y|X)$ denote the conditional distribution between label and input data in unseen and seen domain, respectively. Of course, $p^{(u)}(Y|X)$ and $p^{(s)}(Y|X)$ are independent with the mapping $f$. Thus,
\begin{eqnarray}
\label{eq: zzz1}
p^{(u)}_1(Y,Z)=p^{(u)}(Y|X) \, p^{(u)}_1(X,Z),
\end{eqnarray}
\begin{eqnarray}
\label{eq: zzz2}
p^{(u)}_2(Y,Z)=p^{(u)}(Y|X) \, p^{(u)}_2(X,Z),
\end{eqnarray}
and,
\begin{eqnarray}
\label{eq: zzz3}
p^{(s)}_1(Y,Z)=p^{(s)}(Y|X) \, p^{(s)}_1(X,Z),
\end{eqnarray}
\begin{eqnarray}
\label{eq: zzz4}
p^{(s)}_2(Y,Z)=p^{(s)}(Y|X) \, p^{(s)}_2(X,Z).
\end{eqnarray}
 
Next, to prove the convexity of  $T(\gamma)$, we show that:
\begin{equation}
\label{eq: prove convexity of T(V)}
 \lambda T(\gamma_1) + (1-\lambda)T(\gamma_2) \geq T(\lambda \gamma_1 + (1-\lambda) \gamma_2), \end{equation} for any $\lambda \in [0,1]$. 

Now, let:
\begin{eqnarray}
\label{eq: tata29}
p^{(u)}_{\lambda}(X,Z) =\lambda p^{(u)}_1(X,Z) + (1-\lambda) p^{(u)}_2(X,Z),
\end{eqnarray}
\begin{eqnarray}
\label{eq: tata292}
p^{(s)}_{\lambda}(X,Z) =\lambda p^{(s)}_1(X,Z) + (1-\lambda) p^{(s)}_2(X,Z).
\end{eqnarray}

By definition, the left hand side of (\ref{eq: prove convexity of T(V)}) can be rewritten by:
\begin{small}
\begin{eqnarray}
&&   \lambda T(\gamma_1) + (1-\lambda)T(\gamma_2) \nonumber\\
&=& \lambda D(p^{(u)}_1(Y,Z) \, || \, p^{(s)}_1(Y,Z) ) \nonumber\\
&+& (1-\lambda) D(p^{(u)}_2(Y,Z) \, || \, p^{(s)}_2(Y,Z)) \nonumber \\
&=& \lambda D(p^{(u)}(Y|X) p^{(u)}_1(X,Z)  || p^{(s)}(Y|X) p^{(s)}_1(X,Z) ) \label{eq: 10003}\\
&+&   ( 1  - \lambda ) D (p^{(u)} (Y|X)  p^{(u)}_2   (X,Z) || p^{(s)} (Y|X) p^{(s)}_2  (X,Z) ) \label{eq: 10004}\\
&\geq& D(p^{(u)}(Y|X) p^{(u)}_{\lambda}(X,Z) || p^{(s)}(Y|X) p^{(s)}_{\lambda}(X,Z))  \label{eq: tata28}
\end{eqnarray}\end{small}where (\ref{eq: 10003}) and (\ref{eq: 10004}) due to (\ref{eq: zzz1}), (\ref{eq: zzz2}), (\ref{eq: zzz3}), and (\ref{eq: zzz4}); (\ref{eq: tata28}) due to (\ref{eq: tata29}), (\ref{eq: tata292}), and  the convexity of $D(\cdot || \cdot)$.

Let $f_{\lambda}$ is the corresponding function that induces the joint distribution $p^{(u)}_{\lambda}(X,Z)$ and $p^{(s)}_{\lambda}(X,Z)$\footnote{Indeed, we always can construct $f_{\lambda}$ that induces $p^{(u)}_{\lambda}(X,Z)$ and $p^{(s)}_{\lambda}(X,Z)$ by linear interpolating between $f_1$ and $f_2$.}, the reconstruction loss corresponding to $f_{\lambda}$ is: 
\begin{equation}
\label{eq: tata30}
    \gamma_{\lambda} = \int_{\bm{x} \in \mathcal{X}} \int_{\bm{z} \in \mathcal{Z}} p^{(u)}_{\lambda}(\bm{x},\bm{z}) \ell(\bm{x},\theta(\bm{z})) \, d\bm{x} d\bm{z}.
\end{equation}

By Definition \ref{def: 7},
\begin{eqnarray}
 \label{eq: tata42}
D(p^{(u)}(Y|X) \, p^{(u)}_{\lambda}(X,Z) || p^{(s)}(Y|X) \, p^{(s)}_{\lambda}(X,Z)) \geq T(\gamma_{\lambda}). 
\end{eqnarray}

Combine (\ref{eq: tata28}) and (\ref{eq: tata42}):
\begin{equation}
 \label{eq: tata43}
 \lambda T(\gamma_1) + (1-\lambda)T(\gamma_2) \geq  T(\gamma_{\lambda}),  
\end{equation}
or the left hand side of (\ref{eq: prove convexity of T(V)}) is larger or at least equal to $T(\gamma_{\lambda})$. Next, we show that $T(\gamma_{\lambda})$ is at least as large as the right hand side of (\ref{eq: prove convexity of T(V)}). Particularly, we want to show:
\begin{equation}
 \label{eq: tata44}
 T(\gamma_{\lambda}) \geq T(\lambda \gamma_1 + (1-\lambda) \gamma_2).
\end{equation}

Since $T(\gamma)$ is a monotonically non-increasing function, we want to show that:
 \begin{equation}
 \label{eq: tata45}
 \gamma_{\lambda}  \leq  \lambda \gamma_1 + (1-\lambda) \gamma_2.   
 \end{equation}
 
Indeed,
\begin{small}
 \begin{eqnarray}
 & &   \gamma_{\lambda} = \int_{\bm{x}} \int_{\bm{z}} p^{(u)}_{\lambda}(\bm{x},\bm{z}) \ell(\bm{x},\theta(\bm{z})) d\bm{x} d\bm{z} \label{eq: tata47}\\
 &= &   \int_{\bm{x}}  \int_{\bm{z}}   \Big(   \lambda p^{(u)}_1 (\bm{x},  \bm{z})  +  (1 - \lambda) p^{(u)}_2  (\bm{x}, \bm{z})   \Big)   \ell(\bm{x},   \theta(\bm{z})) d\bm{x} d\bm{z} \label{eq: tata48}\\
 &=& \lambda \int_{\bm{x}} \int_{\bm{z}} p^{(u)}_1(\bm{x},\bm{z}) \ell(\bm{x},\theta(\bm{z})) d\bm{x} d\bm{z} \label{eq: tata49-a}\\
 &+&  (1-\lambda) \int_{\bm{x}} \int_{\bm{z}} p^{(u)}_2(\bm{x},\bm{z}) \ell(\bm{x},\theta(\bm{z})) d\bm{x} d\bm{z} \label{eq: tata49}\\
 &\leq& \lambda \gamma_1 + (1-\lambda) \gamma_2 \label{eq: tata50}
 \end{eqnarray} \end{small}with (\ref{eq: tata47}) due to (\ref{eq: tata30}); (\ref{eq: tata48}) due to (\ref{eq: tata29}); (\ref{eq: tata49-a}) and (\ref{eq: tata49}) due to a bit of algebra; (\ref{eq: tata50}) due to (\ref{eq: tata20}) and (\ref{eq: tata21}), respectively. 

From (\ref{eq: tata45}) and (\ref{eq: tata50}), (\ref{eq: tata44}) follows. Finally, from (\ref{eq: tata43}) and (\ref{eq: tata44}), (\ref{eq: prove convexity of T(V)}) follows. The proof is complete.

\end{document}